\DeclareMathOperator*{\argmax}{arg\,max}
\newtheorem{claim}{Claim}[section]
\newtheorem{hyp}{Hypothesis}[section]
\theoremstyle{definition}
\newtheorem{definition}{Definition}[section]
\newtheorem{example}{Example}[section]
\newcommand{\R}[0]{\mathbb{R}}
\newcommand{\hpolicyspace}{\mathcal{H}}
\newcommand{\rpolicyspace}{\mathcal{R}}
\newcommand{\human}{\textbf{H}}
\newcommand{\ahuman}{\textbf{AH}}
\newcommand{\ah}{\ahuman}
\newcommand{\robot}{\textbf{R}}
\newcommand{\states}{\mathcal{S}}
\newcommand{\actions}{\mathcal{A}}
\newcommand{\hactspace}{\actions^\human}
\newcommand{\ractspace}{\actions^\robot}
\newcommand{\rspace}{\mathcal{R}}
\newcommand{\tr}{T}
\newcommand{\hpolicy}{H}
\newcommand{\rpolicy}{R}
\newcommand{\br}{\mathbf{BR}}
\newcommand{\ir}{\mathbf{IR}}
\newcommand{\val}{U}
\newcommand{\pr}{\mathbb{P}}
\newcommand{\dataset}{\mathcal{D}}
\newcommand{\lvar}{\theta}
\newcommand{\lvarspace}{\Theta}
\newcommand{\data}{x}
\newcommand{\dataspace}{\mathcal{X}}
\newcommand{\model}{m}
\newcommand{\pl}{\mathcal{L}_\dataspace}
\newcommand{\il}{\mathcal{L}_{\lvarspace}}
\title{Literal or Pedagogic Human? \\ Analyzing Human Model Misspecification in Objective Learning}
\author{Smitha Milli, Anca D. Dragan \\
University of California, Berkeley \\
\texttt{\{smilli,\,anca\}@berkeley.edu}}
\begin{document}

\maketitle

\begin{abstract}
   It is incredibly easy for a system designer to misspecify the objective for an autonomous system (``robot''), thus motivating the desire to have the robot \emph{learn} the objective from human behavior instead.  Recent work has suggested that people have an interest in the robot performing well, and will thus behave \emph{pedagogically}, choosing actions that are informative to the robot. In turn, robots benefit from interpreting the behavior by accounting for this pedagogy. In this work, we focus on misspecification: we argue that robots might not know whether people are being pedagogic or literal and that it is important to ask which assumption is \emph{safer} to make. We cast objective learning into the more general form of a common-payoff game between the robot and human, and prove that in any such game literal interpretation is \emph{more robust} to misspecification. Experiments with human data support our theoretical results and point to the sensitivity of the pedagogic assumption.
\end{abstract}

\section{INTRODUCTION}
\begin{figure*}[ht]
    \centering
    \includegraphics[width=\textwidth]{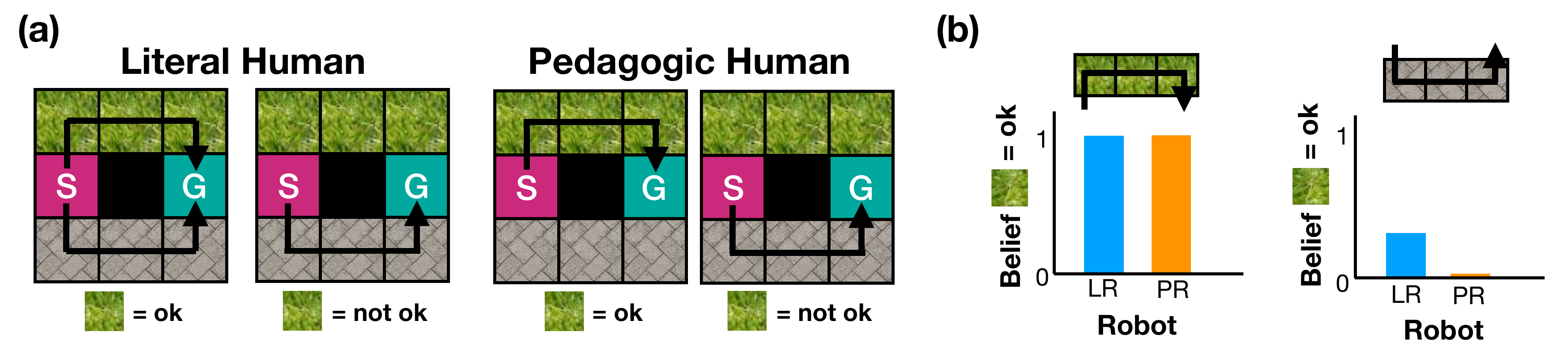}
    \caption{In this scenario, the robot infers whether it is ok to walk on the grass or not from a demonstration provided by the human (it already knows pavement is always ok). (a) The literal and pedagogic human's demonstrations. When grass is ok, the literal human is equally likely to walk on the grass or pavement. On the other hand, when grass is ok, the pedagogic human always walks on the grass in order to signal that grass is ok. (b) The beliefs of the literal robot \textbf{LR} and pedagogic robot \textbf{PR} after observing a demonstration. \textbf{LR} and \textbf{PR} assume human is literal and pedagogic, respectively. \textbf{LR} and \textbf{PR} differ in how strong their beliefs are after witnessing the human walk on pavement, leading to different problems when the human is misspecified. If the human is literal, but the robot is pedagogic, then it makes too strong of an inference. On the other hand, if the human is pedagogic, but the robot  is literal, then it makes too weak of an inference.}
    \label{fig:lit-vs-ped-h}
\end{figure*}
It is notoriously difficult for system designers to directly specify the correct objective for a system ~\citep{krakovna_2018, lehman2018surprising, clark_amodei_2017}. This difficulty has sparked a line of work that instead aims to infer the correct objective from other forms of human input, such as demonstrations ~\citep{ng2000algorithms,abbeel2004apprenticeship,ziebart2008maximum}, comparisons ~\citep{wirth2017survey, dorsa2017active,christiano2017deep}, or corrections ~\citep{bajcsy2017learning, jain2015learning}.

These methods operate under the assumption that human behavior is near-optimal with respect to the objective to be inferred. This assumption makes sense in many domains, like an autonomous car learning an objective function for driving through observing human drivers \citep{levine2012continuous}. However, recent work shows that this assumption may not hold in collaborative settings \citep{ho2016showing}, in which the human is \emph{aware} that the robot needs to learn. In such settings, the person might optimize for \emph{teaching} the robot about the objective, which is not the same as directly optimizing the objective itself \citep{Dragan:2013:TIC:3109708.3109711,hadfield2016cooperative,ho2016showing}. Figure \ref{fig:lit-vs-ped-h} shows an example of the difference between the two. When the human optimizes for the objective, she takes any optimal path to the goal. When the human optimizes for teaching the objective, she takes the path that best signals the objective.


We refer to these two types of human behavior as \emph{literal} and \emph{pedagogic}, and note that the robot can interpret human behavior using either model:
\begin{enumerate}[topsep=0pt,itemsep=1pt]
    \item The \textbf{literal human} directly optimizes for the objective.
    \item The \textbf{literal robot} infers the objective while assuming the human is literal.
    \item The \textbf{pedagogic human} optimizes for teaching the \emph{literal} robot the objective.
    \item The \textbf{pedagogic robot} (sometimes called ``pragmatic" \citep{fisac2018pragmatic}) infers the objective while assuming the human is \emph{pedagogic}.
\end{enumerate}
\vspace{-10pt}
  \begin{align*}
      \vdots
  \end{align*}
In general, this recursion could go on further, and in theory, it is always better for the human and robot to be at a deeper level of recursion. The potential for increased performance suggests that we should try to make our robots more pedagogic. Indeed, this is the direction suggested and pursued by \cite{fisac2018pragmatic,malik18,hadfield2016cooperative}


However, the increased performance is contingent on the human being pedagogic. In practice, it may be difficult to know whether the human is acting literally or pedagogically, and furthermore, different humans may behave differently. We argue that the robot should be robust to misspecification of the \emph{human}, and thus it is important to ask which assumption -- literal or pedagogic -- is \emph{safer} to make. 

Which is worse---interpreting a pedagogic human literally (literal robot + pedagogic human) or interpreting a literal human pedagogically (pedagogic robot + literal human)? In both cases, the model of the human is incorrect, and we might expect that which is better depends on the context and the task; surprisingly, we are able to prove that regardless of the task, the literal robot is more robust, suggesting that it may be safer to simply use a \emph{literal} robot. Our contributions are the following:


\begin{itemize}[topsep=0pt]
    \item  Section \ref{sec:theory}: \emph{Theoretical Analysis.} We cast objective learning into the more general form of a common-payoff game between the human and robot. We then prove that a pedagogic robot and a literal human always do worse than a literal robot and a pedagogic human, showing that misspecification is worse in one direction than the other. 
    \item  Section \ref{sec:experiments}: \emph{Empirical Analysis.} We test the effects of misspecification on data from human teaching. The data confirms that assuming pedagogic behavior when people are literal is worse than assuming literal behavior when people are pedagogic. Surprisingly, we find that the literal robot does better than the pedagogic robot \emph{even when people are trying to be pedagogic}, because of the discrepancy between the pedagogic \emph{model} of humans and real human behavior. 
    \item Section \ref{sec:theory-vs-emp}: \emph{Theoretical vs Empirical.} Our empirical results are surprising, in that, the pedagogic model is a state of the art cognitive science model that is fit to the human data and has relatively high predictive accuracy, yet the pedagogic robot does worse than the literal robot \emph{with humans who are trying to be pedagogic}. 
    We use our theory to derive a hypothesis for why this could be. The hypothesis is that in practice different humans vary in how literal or pedagogic they are, which our theory implies will degrade the performance of the pedagogic robot more than the literal robot. We find positive evidence for this hypothesis, indicating that robustness to a \emph{population} of humans is an important consideration in choosing a pedagogic versus literal robot.
    \item Section \ref{sec:fixing}: \emph{Can we ``fix'' the pedagogic robot?} An intuitive idea for improving the pedagogic robot is to give it a model of the pedagogic human that has higher predictive accuracy. For example, what if instead of assuming all people are pedagogic, the robot estimated how pedagogic each person is? Unfortunately, we find that this makes no difference. And in fact, we show that  better models can actually \emph{worsen} performance due to a subtle, yet remarkable fact: a human model with higher \emph{predictive accuracy} does not necessarily imply higher \emph{inferential accuracy} for the robot.
\end{itemize}

In conclusion, we found that not only are pedagogic robots less robust to the human's recursion level, they can also perform worse when people are actually being pedagogic -- even with a state of the art pedagogic model tuned to human data. This points to a surprising brittleness of the pedagogic assumption. Further, we point out that a more predictive human model will not necessarily solve the problem because improving the model's predictive accuracy does not necessarily lead to better robot inference.

The difficulty of \emph{objective specification} was an important motivation for pursuing objective learning in the first place. But in our pursuit of objective learning, we ought to be careful to not simply trade the problem of objective specification for the equally, if not more, difficult problem of \emph{human specification}. In practice, humans will deviate from our models of them, and thus, it is important to  understand which assumptions are robust and which are not.

Rather than trying to make the robot more pedagogic, which requires brittle assumptions on the human, an alternative direction may be to help the \emph{human} become more pedagogic. How can we make robots that are easier for humans to teach? Perhaps simpler robots are actually better, if it means that humans can more easily teach them.


\section{THEORETICAL ANALYSIS} \label{sec:theory}
In this section, we cast objective learning into the more general form of a common-payoff\footnote{A game in which all agents have the same payoff.} game between the robot and human. We then formalize the literal/pedagogic robot/human, and prove that in \emph{any} common-payoff game a literal robot and pedagogic human perform better than a pedagogic robot and literal human.

\subsection{GENERALIZING OBJECTIVE LEARNING TO CI(RL)}
Before proceeding to our proof, we give background on how common-payoff games generalize standard objective learning. In particular, objective learning can be modeled as a cooperative inverse reinforcement learning (CIRL) game \citep{hadfield2016cooperative}, a common-payoff game in which only the human knows an objective/reward function $r$. 

Formally, a CIRL game is defined as a tuple $\langle \states, \{\hactspace, \ractspace\}, \tr, \{\rspace, r\}, P_0, \gamma\}$ where $\states$ is the set of states, $\hactspace$ and $\ractspace$ are the set of actions available to the human and robot, $\tr(s' \mid s, a^\human, a^\robot)$ is the transition distribution specifying the probability of transitioning to a new state $s'$ given the previous state $s$ and the actions $a^\human$ and $a^\robot$ of both agents, $R$ is the space of reward functions, $r : \states \times \hactspace \times \ractspace \rightarrow \R$ is the shared reward function (known only to \human{}), $P_0$ is the initial distribution over states and reward functions, and $\gamma$ is the discount factor. 

The joint payoff in a CIRL game is traditionally \emph{value}, expected sum of rewards. Since only the human in CIRL knows the shared reward function, this indirectly incentivizes the human to act in ways that signal the reward function and incentivizes the robot to learn about the reward function from the human's behavior\footnote{In fact, in Demonstration-CIRL (Example \ref{ex:dcirl}), the best that the robot can do is infer a posterior distribution over rewards from the human's demonstrations, and then act optimally with respect to the posterior mean.}. However, we can also consider a version that directly incentivizes reward inference by making the payoff the accuracy of the robot's inference.

To illustrate how objective learning settings are special cases of CIRL games, we give an example for the learning from demonstrations setting. 

\begin{example}[Demonstration-CI(RL)]\label{ex:dcirl}
    Learning from demonstrations can be modeled as a CIRL game with two phases. In the first phase, the human provides demonstrations. In the second phase...
    \begin{enumerate}[label=(\alph*),topsep=0pt,noitemsep]
        \item (CIRL) the robot acts in the environment. The game's joint payoff for the robot and human is the value (expected sum of rewards) attained by the robot.
        \item (CI) the robot outputs an estimate of the reward function. The game's joint payoff for the robot and human is a measure of the accuracy of the robot's inference.
    \end{enumerate}
\end{example}

The second formulation (b) is a \emph{cooperative inference} (CI) problem \citep{yang2018optimal,wang2019generalizing}. In CI, there is a teacher (e.g human) who is teaching a learner (e.g. robot) a hypothesis $r$ (e.g the reward) via data $d$ (e.g. demonstrations). The human has a distribution over demonstrations $p^\human(d \mid r)$ and the robot has a distribution over rewards $p^\robot(r \mid d)$. Given a starting distribution of human demonstrations, $p^\human_0(d \mid r)$, the optimal solution for these two distributions can be found via fixed-point iteration of the following recursive equations\footnote{\cite{wang2019generalizing} show that fixed-point iteration converges for all discrete distributions. For simplicity, we have written equations (\ref{eq:r-ci}) and (\ref{eq:h-ci}) assuming that the human and robot's prior over rewards and demonstrations is uniform, but in general, the equations can incorporate any prior \citep{yang2018optimal}.}:
\begin{align}
    & p^{\robot}_{k}(r \mid d) \propto p^\human_{k}(d \mid r) \label{eq:r-ci} \,, \\
    & p^{\human}_{k+1}(d \mid r) \propto p^\robot_{k}(r \mid d) \label{eq:h-ci} \,.
\end{align}

\subsection{PROOF: LITERAL ROBOTS ARE MORE ROBUST}
We now proceed to formalize what we mean by literal and pedagogic and to prove that the literal robot is more robust to misspecification of whether the human is literal or pedagogic. Suppose that the human and robot are acting in a cooperative game. Let $\hpolicyspace$ and $\rpolicyspace$ be the space of policies for the human and robot, respectively. The joint payoff for the human and robot is denoted by $\val : \hpolicyspace \times \rpolicyspace \rightarrow \mathbb{R}$. The joint payoff function could be value, as assumed by CIRL, or accuracy of inference, as assumed by CI.

To define literal and pedagogic, we define a set of recursive policies for the human and robot. Let $\hpolicy_0$ be a starting human policy. For $k \geq 0$, define the recursive policies
\begin{align}
    \rpolicy_{k} & = \br(\hpolicy_k)\,, \label{eq:br-r} \\
    \hpolicy_{k+1} & = \ir(\rpolicy_{k})\,. \label{eq:ir-h}
\end{align} 
We assume that at each level of recursion the robot does a best response $\br : \hpolicyspace \rightarrow \rpolicyspace$. However, for the human, we only assume that at each step she improves over her previous policy. This allows for arbitrary irrationalities, so long as the next policy is at least as good as the previous one. We call this an ``improving'' response $\ir : \rpolicyspace \rightarrow \hpolicyspace$, and define both types of responses below.

\begin{definition}
A \emph{best response} for the robot is a function  $\br : \hpolicyspace \rightarrow \rpolicyspace$ such that for any human policy $\hpolicy \in \hpolicyspace$ and robot policy $\rpolicy \in \rpolicyspace$,
\begin{align*}
    \val(\hpolicy, \br(\hpolicy)) \geq \val(\hpolicy, \rpolicy) \,.
\end{align*}
\end{definition}

\begin{definition} \label{def:improving-response}
An \emph{improving response} for the human is a function  $\ir : \rpolicyspace \rightarrow \hpolicyspace$ such that $\forall \,k \geq 0$,
\begin{align*}
    \val(\ir(\rpolicy_k), \rpolicy_k) \geq \val(\hpolicy_k, \rpolicy_k) \,.
\end{align*}
\end{definition}
\begin{figure*}[t]
    \centering
    \includegraphics[width=\textwidth]{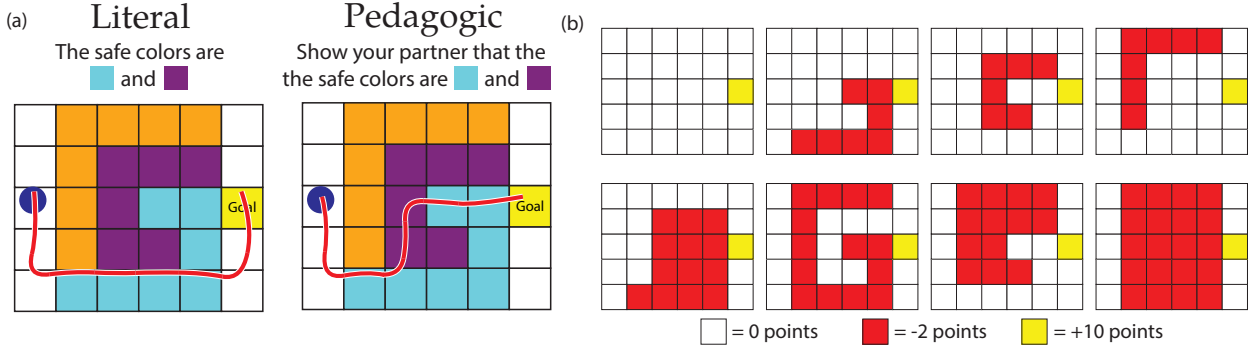}
    \caption{(a) The instructions given to participants in the literal and pedagogic condition, and a sample demonstration from both conditions. In the pedagogic case, participants were more likely to visit multiple safe colors, as well as to loop over safe tiles multiple times. (b) All possible reward functions. Each tile color can be either safe (0 points) or dangerous (-2 points). Figure modified from \cite{ho2018effectively}.}
    \label{fig:exp-details}
\end{figure*}

We give special emphasis to what we call the \emph{literal} human and robot, $\hpolicy_0$ and $\rpolicy_0$, and the \emph{pedagogic} human and robot, $\hpolicy_1$ and $\rpolicy_1$.\footnote{There is nothing that requires the literal level to be at recursion level $0$ and the pedagogic level to be at recursion level $1$. Our proof of Claim \ref{clm:thy-ranking} holds when the literal level is any $k \geq 0$ and the pedagogic level is $k+1$.} We now provide an example of the literal and pedagogic policies for the Demonstration-CI setting (Example \ref{ex:dcirl}b). In this case, the human and robot policies can be modeled by the recursive CI equations (\ref{eq:r-ci}) and (\ref{eq:h-ci}).
\begin{enumerate}[topsep=0pt]
    \item The \textbf{literal human} $H_0$ is noisily-optimal with respect to the reward function $r$. The probability $p^{\human}_0(d \mid r)$ that she gives a demonstration $d$ is exponentially proportional to the reward of the demonstration, denoted by $r(d)$:
    \begin{align*}
        p^{\human}_0(d \mid r) \propto \exp (r(d)) \,.
    \end{align*}
    \item The \textbf{literal robot} $R_0$ does a $\br$ to the literal human, i.e. does Bayesian inference assuming the human is literal,
    \begin{align*}
        p^{\robot}_0(r \mid d) \propto p^{\human}_0(d \mid r) \,,
    \end{align*}
    and then uses the posterior mode as its estimate for the reward $r$.
    \item The \textbf{pedagogic human} $H_1$ picks demonstrations that are informative to the literal robot\footnote{Typically, the human is modeled as being exponentially more informative: $p^{\human}_1(d \mid r) \propto \exp(p^{\robot}_0(r \mid d))$. This is the form we will use in our experiments.}:
    \begin{align*}
        p^{\human}_1(d \mid r) \propto p^{\robot}_0(r \mid d) \,.
    \end{align*}
    Note this is not a best response, which would unrealistically require the human to choose $\argmax_d p^{\robot}_0(r \mid d)$.
    \item The \textbf{pedagogic robot} $R_1$ does a $\br$ to the pedagogic human, i.e. does Bayesian inference assuming the human is literal,
    \begin{align*}
        p^{\robot}_1(r \mid d) \propto p^{\human}_1(d \mid r) \,,
    \end{align*}
    and then uses the posterior mode as its estimate for the reward $r$.
\end{enumerate}

We show that for \emph{any} common-payoff game the payoffs for the literal/pedagogic human/robot pairs have the following ranking:
    \begin{align}
        & \text{\small{Pedagogic $R_1$, $H_1$}} \geq \text{\small{Literal $R_0$, Pedagogic $H_1$}} \nonumber \\
        & \geq \text{\small{Literal $R_0$, $H_0$}} 
        \geq  \text{\small{Pedagogic $R_1$, Literal $H_0$}} \,. \label{eq:theory-ranking}
    \end{align}
In particular, a pedagogic robot $\rpolicy_1$ and a literal human $\hpolicy_0$ always do worse than a literal robot $\rpolicy_0$ and a pedagogic human $\hpolicy_1$, showing misspecification is worse one way than the other. The ranking has the following straight-forward proof.
\begin{claim} \label{clm:thy-ranking}
In any common-payoff game, the ranking of payoffs between a literal/pedagogic human/robot is
\begin{align*}
    \val(\hpolicy_1, \rpolicy_1) \geq \val(\hpolicy_1, \rpolicy_0) \geq \val(\hpolicy_0, \rpolicy_0) \geq \val(\hpolicy_0, \rpolicy_1) \,.
\end{align*}
\begin{proof}
Since $\rpolicy_1 = \br(\hpolicy_1)$, we have $\val(\hpolicy_1, \rpolicy_1) \geq \val(\hpolicy_1, \rpolicy_0)$. Since $\hpolicy_1 = \ir(\rpolicy_0)$, we have $\val(\hpolicy_1, \rpolicy_0) \geq \val(\hpolicy_0, \rpolicy_0)$. Since $\rpolicy_0 = \br(\hpolicy_0)$, we have $\val(\hpolicy_0, \rpolicy_0) \geq \val(\hpolicy_0, \rpolicy_1)$.
\end{proof}
\end{claim}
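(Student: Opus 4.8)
The plan is to establish the four-term chain as three separate inequalities, reading the display from left to right, where each adjacent pair invokes exactly one of the two response definitions. Because the recursive policies satisfy $\rpolicy_0 = \br(\hpolicy_0)$, $\hpolicy_1 = \ir(\rpolicy_0)$, and $\rpolicy_1 = \br(\hpolicy_1)$, every adjacent pair in the chain holds one of the two policies fixed while replacing the other by a response that is guaranteed to weakly improve the joint payoff. I would therefore not attempt any global argument; the inequalities telescope by construction.

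First I would handle the leftmost inequality $\val(\hpolicy_1, \rpolicy_1) \geq \val(\hpolicy_1, \rpolicy_0)$, where the human policy $\hpolicy_1$ is held fixed and the robot policy varies. Since $\rpolicy_1 = \br(\hpolicy_1)$, the best-response definition instantiated at $\hpolicy = \hpolicy_1$ and $\rpolicy = \rpolicy_0$ gives it immediately. Next, for the middle inequality $\val(\hpolicy_1, \rpolicy_0) \geq \val(\hpolicy_0, \rpolicy_0)$, the robot policy $\rpolicy_0$ is fixed and the human improves; this is exactly the improving-response definition at $k = 0$, using $\hpolicy_1 = \ir(\rpolicy_0)$. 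Finally, for the rightmost inequality $\val(\hpolicy_0, \rpolicy_0) \geq \val(\hpolicy_0, \rpolicy_1)$, the human policy $\hpolicy_0$ is fixed and the robot varies; since $\rpolicy_0 = \br(\hpolicy_0)$, the best-response definition at $\hpolicy = \hpolicy_0$ and $\rpolicy = \rpolicy_1$ closes the chain.

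The proof itself has no real obstacle — it is three direct substitutions into the definitions. The genuine content lies entirely in the \emph{asymmetry} of the two definitions, which must be in place before the argument can even run. The robot is assumed to best-respond, which powers the two outer inequalities; the human is assumed only to \emph{improve} (Definition \ref{def:improving-response}), not to best-respond. This weaker human assumption is precisely what lets the middle inequality go through with no rationality requirement, and it is what makes the literal-robot conclusion meaningful: a pedagogic human $\hpolicy_1$ need only weakly improve on $\hpolicy_0$ against the literal robot $\rpolicy_0$. The one point I would double-check is that the improving-response inequality is invoked at the same index $k = 0$ at which the recursion defines $\hpolicy_1 = \ir(\rpolicy_0)$, so that Definition \ref{def:improving-response} is applied legitimately rather than assumed for an arbitrary pair of policies.
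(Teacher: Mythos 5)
Your proof is correct and is essentially identical to the paper's: the same three inequalities, each obtained by instantiating the best-response definition (for the two outer steps) or the improving-response definition at $k=0$ (for the middle step). Your added remark about the asymmetry between $\br$ and $\ir$ accurately reflects the intended content but does not change the argument.
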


\section{EMPIRICAL ANALYSIS} \label{sec:experiments}


In this section, we test our results in practice using experimental data from actual humans, who provide demonstrations that the robot uses to infer the objective from. These experiments are a way of stress-testing our theoretical results, which assumed that at each level of recursion, the robot computes a best response and that the human never does worse than her previous level. In practice, there will be a difference between what humans actually do, and the model of the human, which could cause both the human and robot to break our theoretical assumptions.



\subsection{EXPERIMENTAL DESIGN}
We test the performance of literal/pedagogic robot/human pairs on data from Experiment 2 in \citep{ho2016showing}, which was subsequently followed up on in \citep{ho2018effectively}. In the experiment, humans are asked to act in different types of gridworlds. The gridworld has one goal state that is worth 10 points and three other types of tiles (orange, purple, cyan). Each type of tile can each be either ``safe" (0 points) or ``dangerous" (-2 points). Thus, there are $2^3 = 8$ possible reward functions, which are depicted in Figure \ref{fig:exp-details}b.

Sixty particpants were recruited from Mechanical Turk. The participants  are told that they will get two cents of bonus for each point they get. Participants were split into two conditions, a literal and pedagogic condition, depicted in Figure \ref{fig:exp-details}a. In the literal condition, the participant only gets points for their own actions in the gridworld. In the pedagogic condition, the participant is told that their demonstration will be shown to another person, a learner, who will then apply what they learn from the demonstration to act in a separate gridworld. The participant still gets points based on their own actions, but is also told that the number of points the learner receives will be added as a bonus.

\subsection{HUMAN AND ROBOT MODELS} \label{sec:hr-models}
We model the robot and human following \cite{ho2018effectively}. We use the same model parameters that \cite{ho2018effectively} found to be the best qualitative match to the human demonstrations.

\emph{Notation.} Let $\states$ be the set of states and $\actions$ be the set of actions. The gridworld has a reward function $r : \states \times \actions \times \states \rightarrow \R$. The optimal $Q$-value function for a reward function $r$ is denoted by $Q^*_r : \states \times \actions \rightarrow \R$. The robot has a uniform prior belief over the reward function, i.e, it puts uniform probability on all $2^3 = 8$ reward functions depicted in Figure \ref{fig:exp-details}b. The human and robot models are as follows.

\begin{enumerate}[topsep=0pt]
    \item \textbf{Literal \human{}.}  At each time step $t$, the probability $H_L(a_t|s_t, r)$ that the literal human takes action $a_t$ given state $s_t$ and the reward $r$ is exponentially proportional to the optimal $Q$-value,
        \begin{align} \label{eq:lit-human}
            H_L(a_t \mid s_t, r) \propto \exp(Q^*_r(s_t, a_t)/\tau_L) \,.
        \end{align}
    The temperature parameter $\tau_L$ controls how noisy the human is.
    \item \textbf{Literal \robot{}.} The robot does Bayesian inference, while assuming that the human is literal. The robot's posterior belief at time $t+1$ is
        \begin{align} 
            R_L^{t+1}(r) \propto & ~H_L(a_t \mid s_t, r) \nonumber \\
            & \cdot T(s_{t+1} \mid s_t, a_t) \cdot R_L^t(r) \,. \label{eq:lit-robot}
        \end{align}
    \item \textbf{Pedagogic \human{}.} The pedagogic human optimizes a reward function $r'$ that trades-off between optimizing the reward in the gridworld and teaching the literal robot the reward. At time step $t$, the reward is $r'(s_t, a_t, s_{t+1}) = r(s_t, a_t, s_{t+1}) + \kappa (R_L^{t+1}(r) - R_L^{t}(r))$. The parameter $\kappa \geq 0$ controls how ``pedagogic'' the human is. The probability $H_P(a_t \mid s_t, r)$ that the pedagogic human takes action $a_t$ given state $s_t$ and the reward $r$ is exponentially proportional to the optimal $Q$-value associated with the modified reward $r'$,
        \begin{align} \label{eq:ped-human}
            H_P(a_t \mid s_t, r) \propto \exp(Q^*_{r'}(s_t, a_t)/\tau_P) \,.
        \end{align}
    The temperature parameter $\tau_P$ controls how noisy the human is.
    \item \textbf{Pedagogic \robot{}.} The robot does Bayesian inference, while assuming that the human is literal. The robot's posterior belief at time $t+1$ is
        \begin{align} 
            R_P^{t+1}(r) \propto & ~H_P(a_t \mid s_t, r) \nonumber \\
            & \cdot T(s_{t+1} \mid s_t, a_t)\cdot R_P^t(r) \,. \label{eq:ped-robot}
        \end{align}
\end{enumerate}

\subsection{RESULTS}
We evaluate each literal/pedagogic robot/human pair on the accuracy of the robot's inference, i.e, $\pr(\hat{r} = r)$, where $r$ is the true reward and $\hat{r}$ is the robot's guess. We take the robot's guess $\hat{r}$ to be the mode of its belief, as given by the robot models (\ref{eq:lit-robot}) and (\ref{eq:ped-robot}). We test each pair with both the demonstrations generated by actual humans and demonstrations generated by simulating humans according to the human models (\ref{eq:lit-human}) and (\ref{eq:lit-robot}).


\begin{figure}[t]
    \centering
    \includegraphics[width=.5\textwidth]{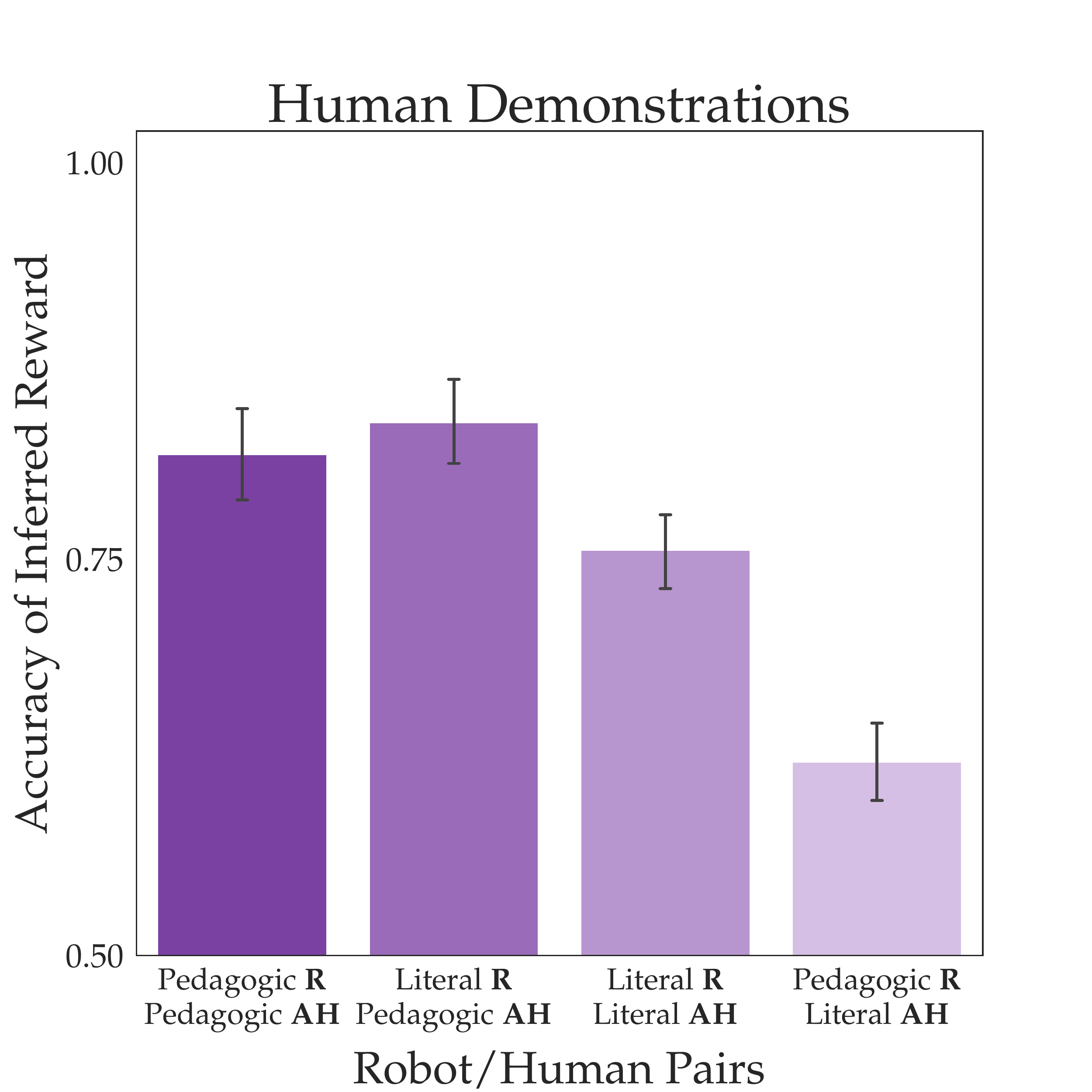}
    \caption{The accuracy of the robot's inferred reward for different pairs of human/robot pairs under the demonstrations provided by actual humans.}
    \label{fig:human-results}
\end{figure}

Figure \ref{fig:human-results} depicts our experimental results\footnote{All error bars and confidence bands in the paper depict bootstrapped 95\% confidence intervals.}. We refer to the actual human as \ahuman{}, the human model as \human{}, and the robot as \robot{}. Consistent with the theory, the performance of pedagogic \robot{} and literal \ahuman{} is (significantly) worse than that of literal \robot{} and pedagogic \ahuman{}, validating that misspecification is worse one way than the other. However, the overall ranking of robot/human pairs does not match the theoretical ranking (Equation \ref{eq:theory-ranking}) we expected. Surprisingly, even when \ah{} is pedagogic, pedagogic \robot{} performs (insignificantly) worse than literal \robot{}. The empirical ranking is
    \begin{align*}
        & \text{\small{Literal \robot, Pedagogic \ahuman}} \geq \text{\small{Pedagogic \robot, \ahuman}} \\
        & \geq \text{\small{Literal \robot, \ahuman}} \geq  \text{\small{Pedagogic \robot, Literal \ahuman}} \,.
    \end{align*}
The empirical ranking carries a stronger implication than our theoretical ranking---it implies that regardless of whether the human is literal or pedagogic, the robot should be literal. 

\section{THEORETICAL VS EMPIRICAL}
\begin{figure}[t]
    \centering
    \includegraphics[width=.5\textwidth]{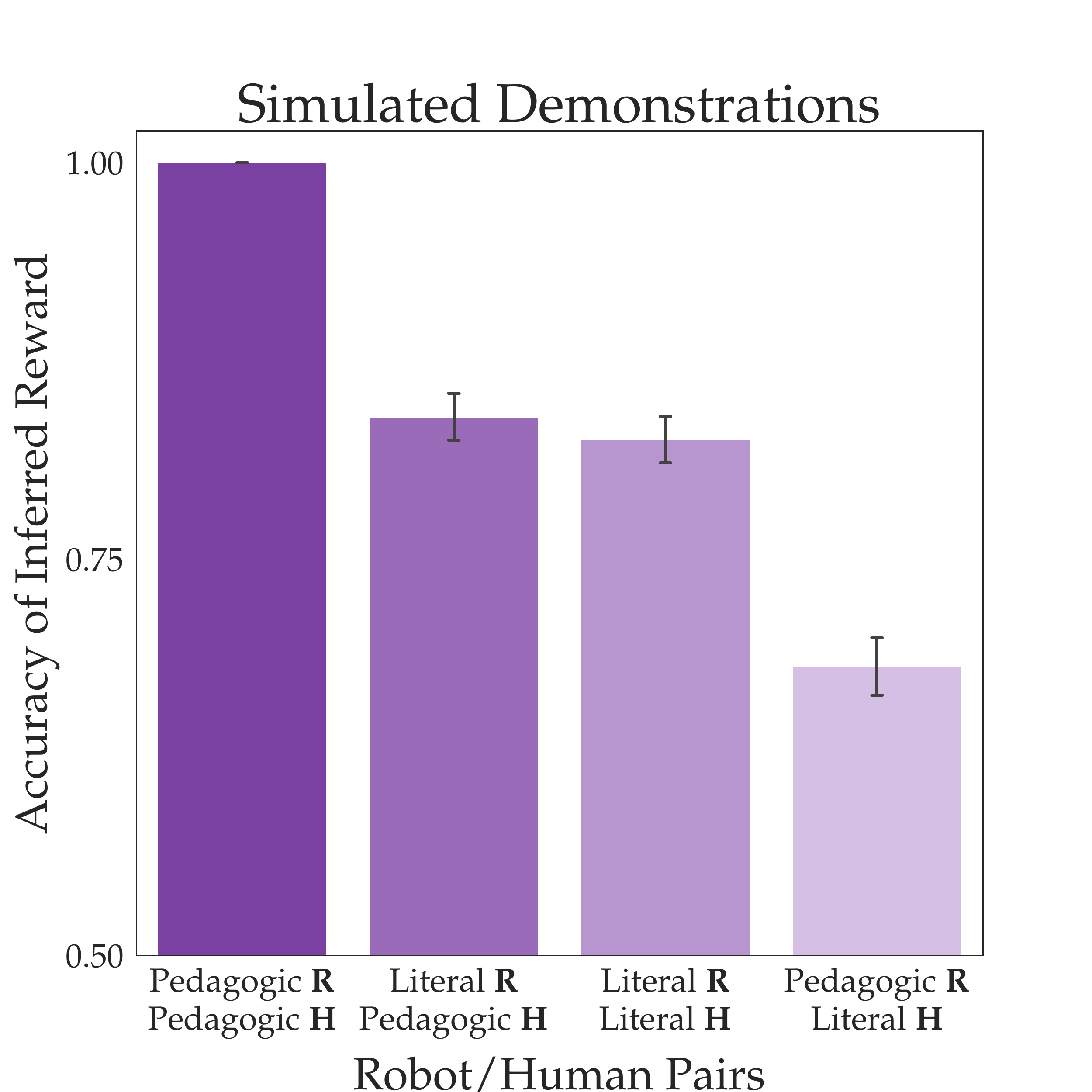}
    \caption{The accuracy of the robot's inferred reward for different pairs of human/robot pairs under the demonstrations provided by humans simulated according to the human models in Section \ref{sec:hr-models}.}
    \label{fig:sim-results}
\end{figure}

\begin{figure*}[t]
\centering
\begin{subfigure}[t]{.5\textwidth}
  \centering
  \includegraphics[width=\textwidth]{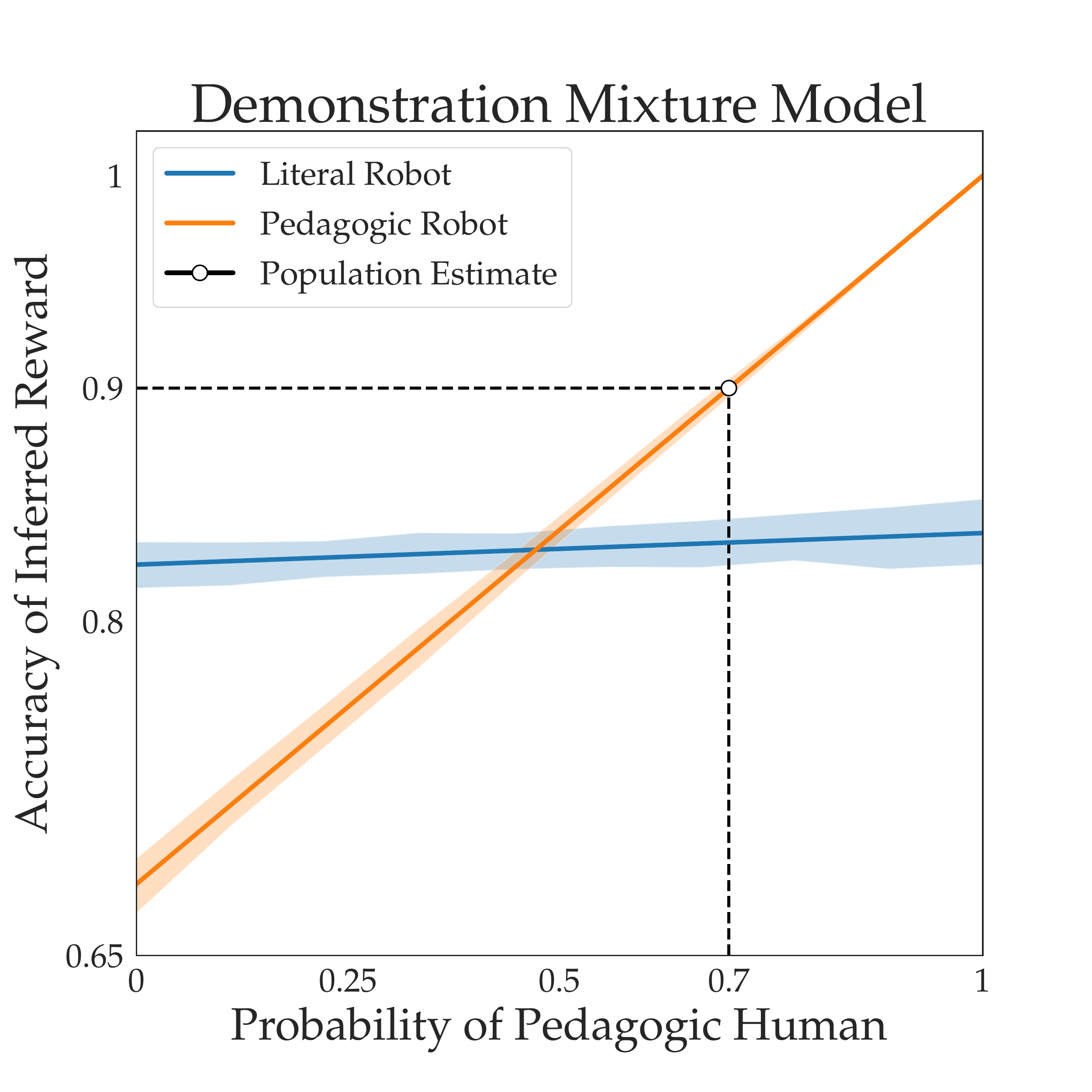}
\end{subfigure}%
\begin{subfigure}[t]{.5\textwidth}
  \centering
  \includegraphics[width=\textwidth]{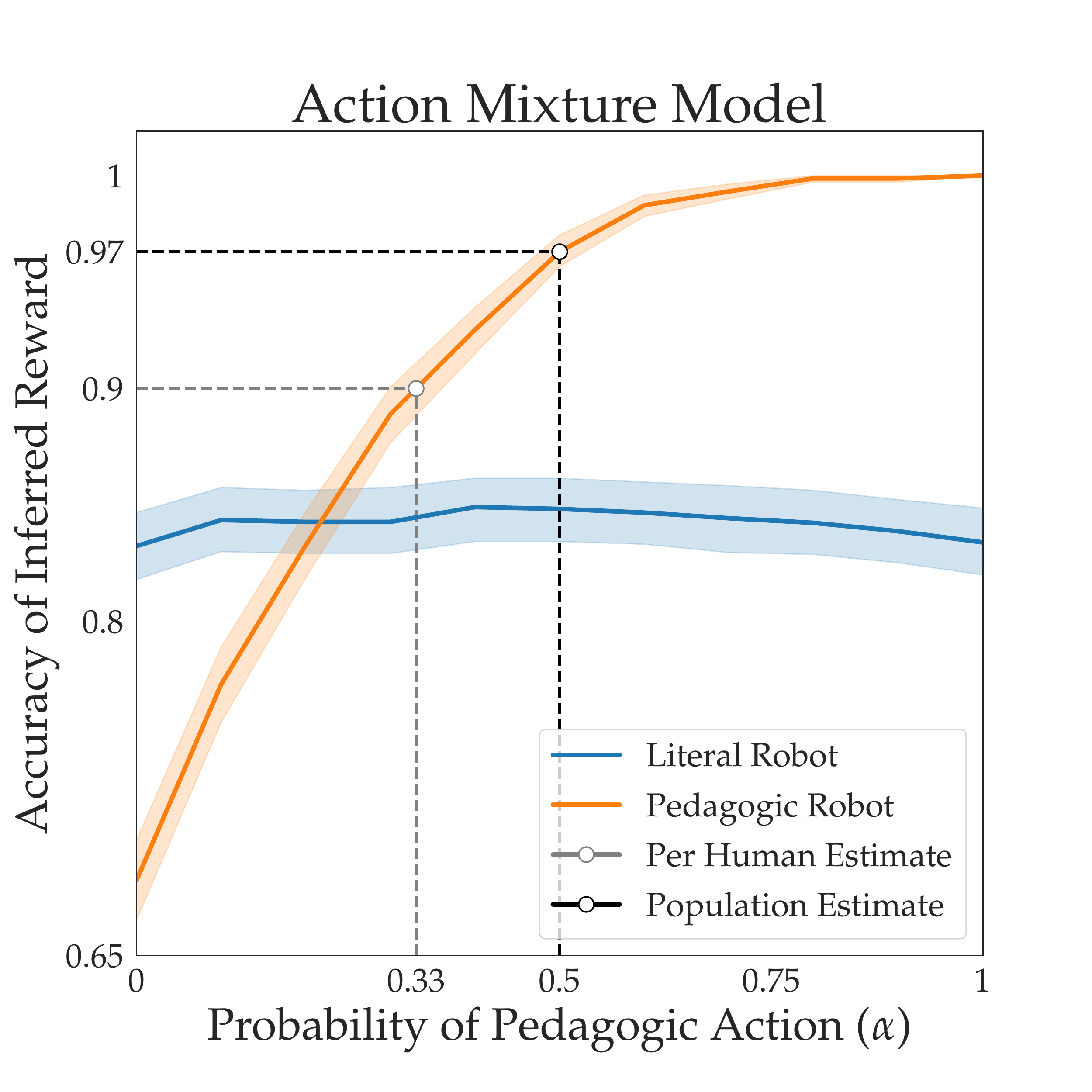}
  \label{fig:sub1}
\end{subfigure}
\caption{The performance of the literal and pedagogic robot when demonstrations are simulated according to the demonstration mixture model (left/a) or the action mixture model (right/b).}
\label{fig:mixture-results}
\end{figure*}

\label{sec:theory-vs-emp}
In our empirical analysis, we found a perplexing result: the literal robot does better than the pedagogic robot \emph{even when people are trying to be pedagogic}. How could this be? Figure \ref{fig:sim-results} shows the accuracy of robot/human pairs when the human demonstrations are simulated from the literal \human{} and pedagogic \human{} models described in equations (\ref{eq:lit-human}) and (\ref{eq:ped-human}). In the simulations, we find, as expected, that pedagogic \robot{} and pedagogic \human{} do better than literal \robot{} and pedagogic \human{}. So clearly, the reason pedagogic \robot{} does worse in the human experiments is that pedagogic \ah{} (actual humans) is not the same as the pedagogic \human{} (the human model). 


But, in \emph{what way} does pedagogic \ah{} deviate from pedagogic \human{}? Why is literal \robot{} more robust to the deviation than pedagogic \robot{}? We derive a hypothesis from our theory. In the theoretical ranking (Equation \ref{eq:theory-ranking}), the performance of literal \robot{} and literal/pedagogic \human{} is sandwiched between the performance of pedagogic \robot{} + pedagogic \human{} and pedagogic \robot{} + literal \human{}. Thus, literal \robot{} is more robust to whether \human{} is literal or pedagogic than pedagogic \robot{}. This suggests an explanation for why literal \robot{} does better, namely, that pedagogic \ah{} is actually sometimes literal! 

\begin{hyp} \label{exp}
Pedagogic \ah{} is actually ``in between" literal \human{} and pedagogic \human{}. If this is true, then our theory implies that literal \robot{} will be affected less than pedagogic \robot{}, potentially explaining why pedagogic \robot{} does worse with pedagogic \ah{} than literal \robot{}.
\end{hyp}

To test Hypothesis \ref{exp}, we create two models that are mixtures of the literal and pedagogic model. We then measure how much better the mixture models are at predicting pedagogic \ah{}, and how much the literal and pedagogic \robot{} are affected by demonstrations generated from the mixture models.


\subsection{DEMONSTRATION MIXTURE MODEL}
First, we test what we call the ``demonstration mixture model''. It is possible that the humans in the pedagogic condition from \cite{ho2016showing} followed different strategies; some may have attempted to be pedagogic, but others may have simply been literal. If we look at which model, literal or pedagogic, is a better fit on a per-individual basis, we find that 89.7\% of literal \ah{} are better described by literal \human{}, but in comparison, only 70.0\% of pedagogic \ah{} are better described by pedagogic \human{}. If we simulate pedagogic humans as only following the model 70.0\% of the time, then the accuracy of the pedagogic robot drops to 90\% (Figure \ref{fig:mixture-results}a). The literal robot is hardly impacted.

\subsection{ACTION MIXTURE MODEL} \label{sec:action-mixture}
\begin{figure*}[t]
\centering
\begin{subfigure}[t]{.5\textwidth}
  \centering
  \includegraphics[width=\textwidth]{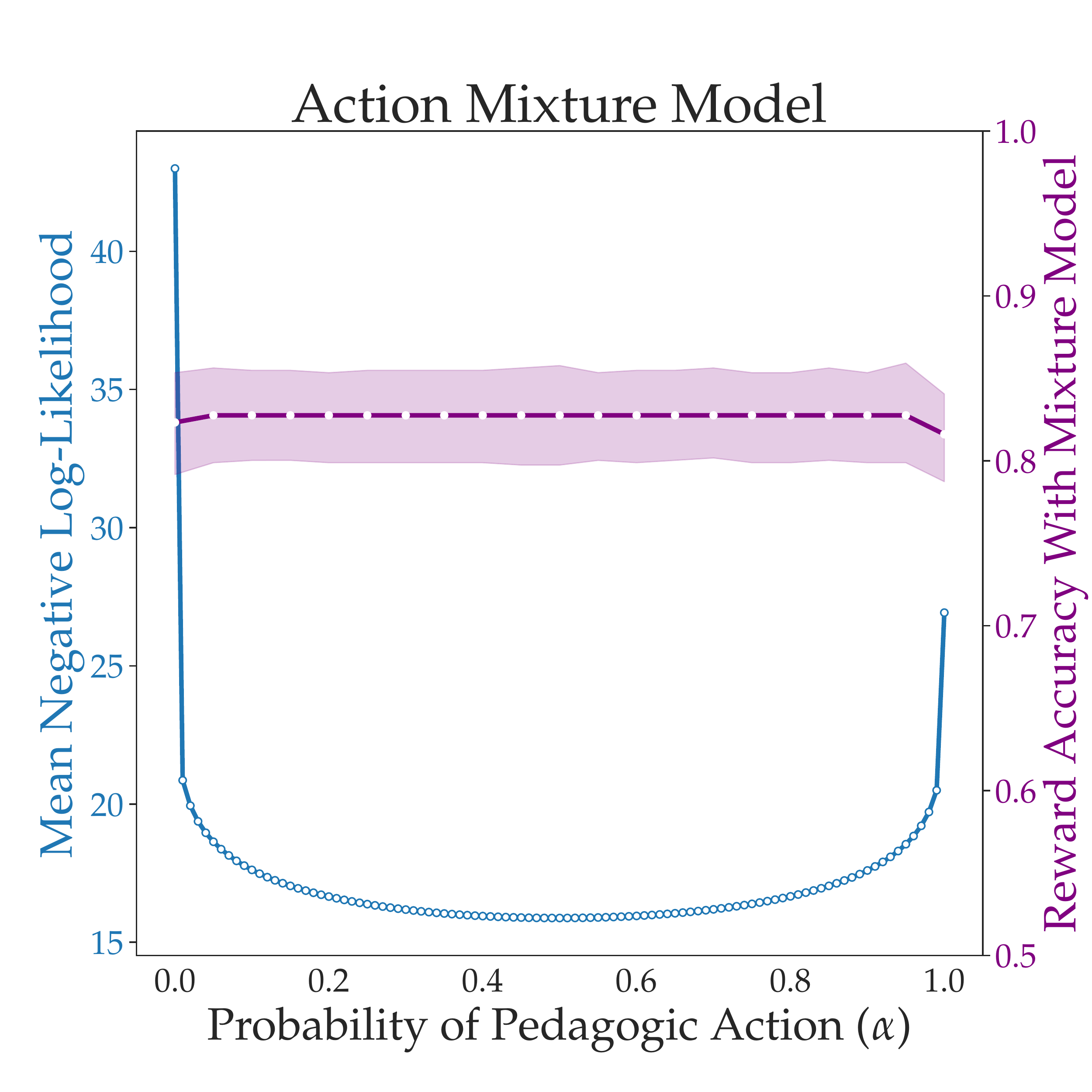}
\end{subfigure}%
\begin{subfigure}[t]{.5\textwidth}
  \centering
  \includegraphics[width=\textwidth]{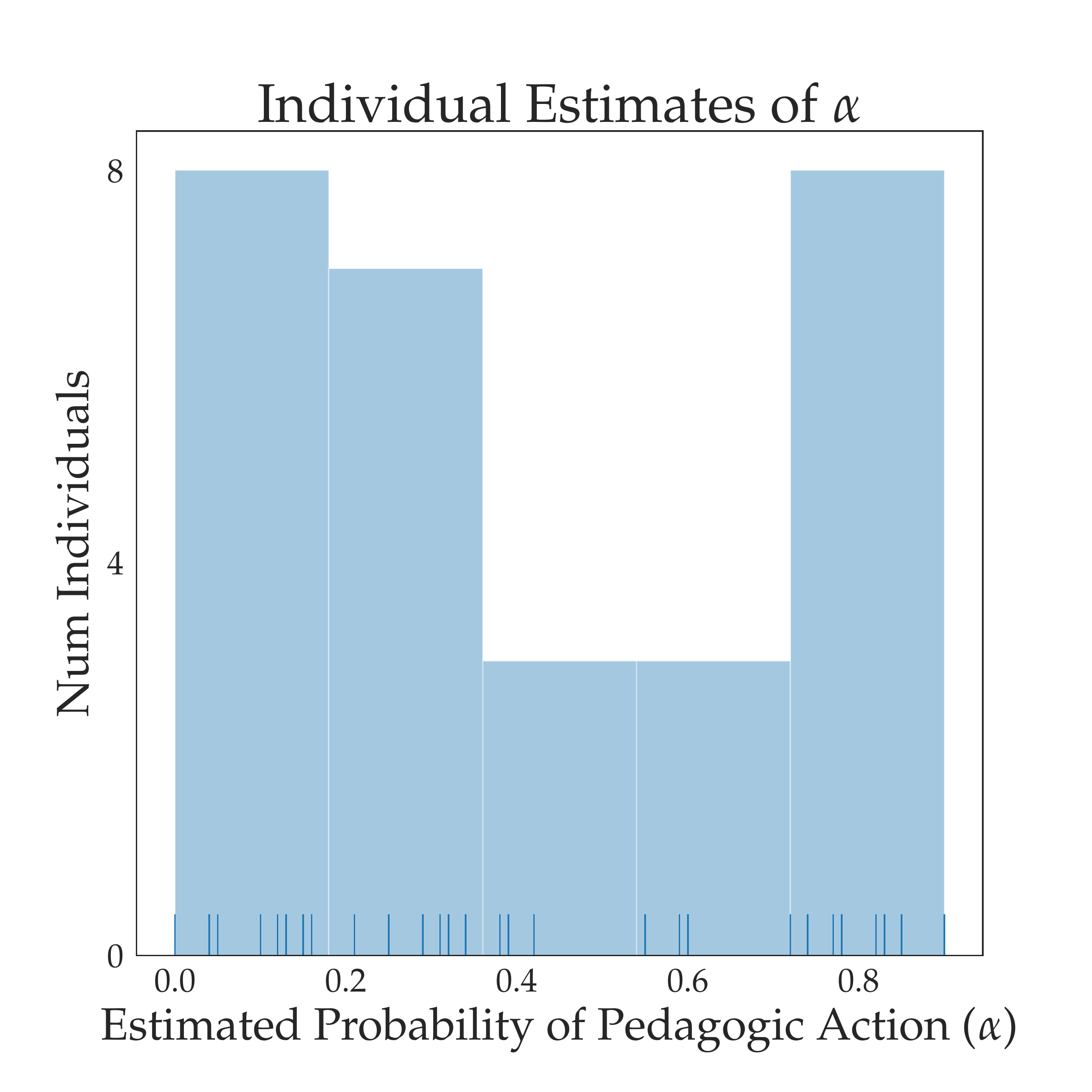}
\end{subfigure}
\caption{(left/a) In blue is the mean negative log-likelihood of the pedagogic human demonstrations under the action mixture model with probability $\alpha$. All individuals are assumed to have the same value of $\alpha$. Note that a mixture between the literal and pedagogic model is far better than either the literal model ($\alpha = 0$) or the pedagogic model ($\alpha = 1$). In purple is the accuracy of the robot's inferred reward, when given demonstrations from pedagogic \ahuman{}, if it assumes the human acts according to the action mixture model. (right/b) The mixture probability $\alpha$ that maximized log-likelihood for each individual pedagogic \ahuman{}.}
\label{fig:alpha-fits}
\end{figure*}
We now test a more continuous version of the previous setting. We now model humans as acting according to a mixture policy $H_m$ between the literal $H_L$ and pedagogic $H_P$ policies. In particular, at each time $t$ the probability the human picks action $a_t$ from state $s$ in an environment with reward $r$ is
\begin{align}
    & H_{M}(a_t \mid s_t, r) = \nonumber \\ 
    & \alpha H_{P}(a_t \mid  s_t, r) + (1-\alpha) H_{L}(a_t \mid s_t, r) \,. \label{eq:action-mixture}
\end{align}
The parameter $\alpha$ is the probability of picking an action according to the pedagogic model. We plot the likelihood of the actual pedagogic human demonstrations as a function of $\alpha$, $(\ref{eq:action-mixture})$. The best value of $\alpha$ over the whole population was $\alpha_P = 0.5$ (Figure \ref{fig:alpha-fits}a). But surprisingly, even a mixture with $\alpha = 0.01$ or $\alpha = 0.99$ is far better than either the literal ($\alpha = 0$) or pedagogic ($\alpha = 1$) model. In addition, the best estimates for $\alpha$ on an individual basis are somewhat bimodal (Figure \ref{fig:alpha-fits}b), indicating that there is high individual variation.

Figure \ref{fig:mixture-results}b shows the performamce of the literal and pedagogic robot as $\alpha$ varies. At the best population level $\alpha_P = 0.5$, we find that the pedagogic robot gets 97\% accuracy. However, if we simulate the pedagogic humans using the values of $\alpha$ estimated at an individual level, then the pedagogic robot's accuracy drops to 90\%, highlighting the importance of individual variation. The literal robot again remains unaffected.

\subsection{DISCUSSION}
In both the demonstration and action mixture models, we found that a mixture between pedagogic \human{} and literal \human{} was a much better fit to \ahuman{}. In both cases, when pedagogic \human{} is simulated according to the more accurate mixture model pedagogic \robot{}'s accuracy drops ten percentage points, but literal \robot{}'s accuracy hardly changes. Thus our results provide positive evidence for Hypothesis \ref{exp}. 

However, Hypothesis \ref{exp} does not explain the full story. The hypothesis can only account for a ten percentage drop in accuracy, but even with this drop, pedagogic \robot{} would still be better than literal \robot{}. Furthermore, with a cursory glance at Figure \ref{fig:mixture-results}, one might be tempted to consider pedagogic \robot{} quite robust, as it remains high-performing for large ranges of $\alpha$. However, as usual, the real problem is the \emph{unknown unknowns}. Our empirical results imply that there are other ways that humans deviate from the model and that literal \robot{} is more robust than pedagogic \robot{} to these unknown deviations. Rather than robustness to $\alpha$, the more compelling reason for choosing to use literal \robot{} is robustness to these unknown deviations.

\section{CAN WE ``FIX'' THE PEDAGOGIC ROBOT?} \label{sec:fixing}
An intuitive idea for improving the performance of the pedagogic robot \robot{} is to give it a model of the pedagogic human \human{} that is more predictive. Unfortunately, it is not so simple. For example, in Section \ref{sec:action-mixture}, we showed that a mixture between the literal and pedagogic human model was a much better fit to actual pedagogic humans than either the literal or pedagogic human model. What if we had pedagogic \robot{} use the more accurate mixture model as its model of the pedagogic human? Unfortunately, as shown in Figure \ref{fig:mixture-results}a (purple line), even if pedagogic \robot{} uses a better predictive model, i.e. the action mixture model, it does not perform any better.

In fact,  it is possible for pedagogic \robot{} to do \emph{worse} when given a \emph{more} predictive model of the human. The reason is that a model that is better for \emph{predicting} behavior (e.g. human demonstrations) is not necessarily better for \emph{inferring} underlying, latent variables (e.g. the reward function), which is what is important for the robot. This may help explain why pedagogic \robot{} does worse than literal \robot{} with pedagogic \ah{}, even though the pedagogic \human{} model assumed by pedagogic \robot{} is more predictive of pedagogic \ah{} than the literal \human{} model assumed by literal \robot{}.

To illustrate, suppose there is a latent variable $\lvar \in \lvarspace$ with prior distribution $p(\lvar)$ and observed data $\data \in \dataspace$ generated by some distribution $p(\data \mid \lvar)$. In our setting, $\lvar$ corresponds to the objective and $\data$ corresponds to the human input. For simplicity, we assume $\lvarspace$ and $\dataspace$ are finite. We have access to a training dataset $\dataset = \{(\lvar_i, \data_i)\}_{i=1}^n$ of size $n$. A \emph{predictive model} $m(\data \mid \theta)$ models the conditional probability of the data $\data$ given latent variable $\theta$ for all $\data \in \dataspace, \lvar \in \lvarspace$. In our case, the predictive model is the model of the human. The \emph{predictive likelihood} $\pl$ of a predictive model $m$ is simply the likelihood of the data under the model: 
\begin{align} \label{eq:pred-likelihood}
    \pl(m) = \prod_{i=1}^n m(\data_i \mid \lvar_i)\,.
\end{align}
The \emph{inferential likelihood} is the likelihood of the latent variables after applying Bayes' rule: 
\begin{align} \label{eq:inf-likelihood}
    \il(m) = \prod_{i=1}^{n} \frac{m(\data_i \mid \theta_i)p(\theta_i)}{\sum_{\lvar}m(\data_i \mid \lvar)p(\lvar)} \,.
\end{align}
Next, we show higher predictive likelihood does not necessarily imply higher inferential likelihood.
\begin{claim}[Predictive vs inferential likelihood] \label{clm:pred-vs-inf}
There exist settings in which there are two predictive models $m_1, m_2$ such that $\pl(m_1) > \pl(m_2)$, but $\il(m_1) < \il(m_2)$.
\end{claim}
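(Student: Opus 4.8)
The plan is to prove this purely existential statement by exhibiting an explicit counterexample in the smallest nontrivial setting: two latent values $\lvarspace = \{\lvar_1, \lvar_2\}$, two data values $\dataspace = \{\data_1, \data_2\}$, a uniform prior $p(\lvar_1) = p(\lvar_2) = 1/2$, and a training set $\dataset$ consisting of the single labeled pair $(\lvar_1, \data_1)$, so $n=1$. In this setting the two likelihoods collapse to transparent one-line expressions. The predictive likelihood is just $\pl(\model) = \model(\data_1 \mid \lvar_1)$, while after the uniform prior cancels the inferential likelihood is the posterior of the true latent, $\il(\model) = \model(\data_1 \mid \lvar_1) / \big(\model(\data_1 \mid \lvar_1) + \model(\data_1 \mid \lvar_2)\big)$.

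The conceptual key --- and really the entire content of the claim --- is that these two quantities depend on different features of the model. Predictive likelihood rewards only how much mass $\model$ places on the observed $\data_1$ under the \emph{correct} latent $\lvar_1$; it is completely blind to $\model(\data_1 \mid \lvar_2)$. Inferential likelihood, by contrast, rewards the \emph{contrast} between $\model(\data_1 \mid \lvar_1)$ and $\model(\data_1 \mid \lvar_2)$, since only their ratio governs how sharply the posterior concentrates on the truth. So I would choose $\model_1$ to be ``accurate but undiscriminating'' and $\model_2$ to be ``less accurate but discriminating.''

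Concretely, I would take $\model_1$ to predict $\data_1$ with high probability under \emph{both} latent values (e.g.\ $\model_1(\data_1 \mid \lvar_1) = \model_1(\data_1 \mid \lvar_2) = 0.9$), yielding high predictive likelihood but a posterior that never moves off the prior, so $\il(\model_1) = 1/2$. I would take $\model_2$ to predict $\data_1$ with moderate probability under $\lvar_1$ but low probability under $\lvar_2$ (e.g.\ $\model_2(\data_1 \mid \lvar_1) = 0.6$, $\model_2(\data_1 \mid \lvar_2) = 0.1$), so its predictive likelihood $0.6$ is lower, yet its posterior $0.6/0.7 \approx 0.86$ is far higher. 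This gives $\pl(\model_1) = 0.9 > 0.6 = \pl(\model_2)$ while $\il(\model_1) = 0.5 < 0.86 = \il(\model_2)$, exactly as required. I would close by checking that each $\model_j(\cdot \mid \lvar)$ is a valid distribution over $\dataspace$, which merely pins down the unused entries $\model_j(\data_2 \mid \cdot)$.

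I do not expect a genuine obstacle, since the statement is existential: once the right lever is identified, the example writes itself. The only point requiring care is selecting numbers that make \emph{both} strict inequalities hold simultaneously under a shared prior and valid normalization, and the observation that the decisive quantity is $\model(\data_1 \mid \lvar_2)$ --- the likelihood under the \emph{wrong} latent, which predictive likelihood ignores entirely --- makes this immediate. To connect the example back to the paper's setting, I would additionally remark that $\model_1$ plays the role of a higher-predictive-accuracy human model whose behavior nonetheless fails to identify the reward, mirroring why a more predictive pedagogic model need not yield better robot inference.
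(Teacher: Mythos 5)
Your proposal is correct and takes essentially the same approach as the paper: both prove the claim by exhibiting an explicit counterexample that exploits the fact that predictive likelihood ignores the model's probability of the observed data under the \emph{wrong} latent variable, while inferential likelihood depends on exactly that contrast. Your instance ($n=1$, two data values) is in fact more minimal than the paper's ($n=9$, three data values), and the numbers check out: $\pl(m_1)=0.9>0.6=\pl(m_2)$ while $\il(m_1)=1/2<6/7=\il(m_2)$.
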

\begin{proof}

Suppose that $\lvarspace = \{\lvar_1, \lvar_2\}$ and $\mathcal{X} = \{x_1, x_2, x_3\}$, the prior $p(\lvar)$ is uniform over $\lvarspace$, and the dataset $\dataset$ contains the following $n=9$ items:
\begin{align*}
\mathcal{D} =
\{& (\theta_1, x_1), (\theta_1, x_1), (\theta_1, x_2),
(\theta_2, x_2), (\theta_2, x_2),\\
& (\theta_2, x_3), (\theta_2, x_3), (\theta_2, x_3), (\theta_2, x_3)\} \,.
\end{align*}
    Define the models $m_1(\data \mid \lvar)$ and $m_2(\data \mid \lvar)$ by the following conditional probabilities tables.
    {
    
    \centering
    \begin{tabular}{|c|c|c|c|}
    \multicolumn{4}{c}{$\model_1(\data \mid \lvar)$} \\
        \hline  & $x_1$ & $x_2$ & $x_3$ \\
        \hline $\lvar_1$ & 2/3 & 1/3 & 0 \\
        \hline $\lvar_2$ & 0 & 1/3 & 2/3 \\ \hline
    \end{tabular}
    \quad 
    \begin{tabular}{|c|c|c|c|}
    \multicolumn{4}{c}{$\model_2(\data \mid \lvar)$} \\
        \hline  & $x_1$ & $x_2$ & $x_3$ \\
        \hline $\lvar_1$ & 2/3 & 1/3 & 0 \\
        \hline $\lvar_2$ & 0 & 2/3 & 1/3 \\ \hline
    \end{tabular}
    
    }
    The model $m_1$ has predictive likelihood $\pl(\model_1) = (2/3)^6(1/3)^3$ and inferential likelihood $\il(\model_1) = (1/2)^3$. The model $\model_2$ has predictive likelihood $\pl(\model_2) = (2/3)^4(1/3)^5$ and inferential likelihood $\il(\model_2) = (1/3)(2/3)^3$. Thus, $\pl(\model_1) > \pl(\model_2)$, but $\il(\model_1) < \il(\model_2)$.
\end{proof}
In our context, Claim \ref{clm:pred-vs-inf} means that a human model that is better in terms of \emph{prediction} may actually be worse for the robot to use for \emph{inference}. An alternative approach could be to directly fit models that optimize for inferential likelihood. Unfortunately, this optimization becomes much trickier because of the normalization over the latent variable space $\lvarspace$ in the denominator of (\ref{eq:inf-likelihood}), see e.g. \citep{dragan2013legibility}.

\section*{Acknowledgements}
We thank John Miller, Rohin Shah, and Mark Ho for providing feedback on a draft of the paper. In addition, we thank Mark Ho for his helpfulness in answering any questions about his work, for sharing his code, and for points of discussion along the way.

The work in this paper was supported by the National Science Foundation National Robotics Initiative and the National Science
Foundation Graduate Research Fellowship Program under Grant
No. DGE 1752814.
\bibliography{main.bib}

\begin{thebibliography}{21}
\providecommand{\natexlab}[1]{#1}
\providecommand{\url}[1]{\texttt{#1}}
\expandafter\ifx\csname urlstyle\endcsname\relax
  \providecommand{\doi}[1]{doi: #1}\else
  \providecommand{\doi}{doi: \begingroup \urlstyle{rm}\Url}\fi

\bibitem[Abbeel \& Ng(2004)Abbeel and Ng]{abbeel2004apprenticeship}
Pieter Abbeel and Andrew~Y Ng.
\newblock Apprenticeship learning via inverse reinforcement learning.
\newblock In \emph{International Conference on Machine Learning (ICML)}, 2004.

\bibitem[Bajcsy et~al.(2017)Bajcsy, Losey, O’Malley, and
  Dragan]{bajcsy2017learning}
Andrea Bajcsy, Dylan~P Losey, Marcia~K O’Malley, and Anca~D Dragan.
\newblock Learning robot objectives from physical human interaction.
\newblock \emph{Conference on Robot Learning (CoRL)}, 2017.

\bibitem[Christiano et~al.(2017)Christiano, Leike, Brown, Martic, Legg, and
  Amodei]{christiano2017deep}
Paul~F Christiano, Jan Leike, Tom Brown, Miljan Martic, Shane Legg, and Dario
  Amodei.
\newblock Deep reinforcement learning from human preferences.
\newblock In \emph{Neural Information Processing Systems (NeurIPS)}, 2017.

\bibitem[Clark \& Amodei(2017)Clark and Amodei]{clark_amodei_2017}
Jack Clark and Dario Amodei.
\newblock Faulty reward functions in the wild, Mar 2017.
\newblock URL \url{https://blog.openai.com/faulty-reward-functions/}.

\bibitem[Dragan et~al.(2013{\natexlab{a}})Dragan, Lee, and
  Srinivasa]{dragan2013legibility}
Anca~D Dragan, Kenton~CT Lee, and Siddhartha~S Srinivasa.
\newblock Legibility and predictability of robot motion.
\newblock In \emph{Conference on Human-Robot Interaction (HRI)},
  2013{\natexlab{a}}.

\bibitem[Dragan et~al.(2013{\natexlab{b}})Dragan, Srinivasa, and
  Lee]{Dragan:2013:TIC:3109708.3109711}
Anca~D. Dragan, Siddhartha~S. Srinivasa, and Kenton C.~T. Lee.
\newblock Teleoperation with intelligent and customizable interfaces.
\newblock \emph{Journal of Human-Robot Interaction}, 2013{\natexlab{b}}.

\bibitem[Fisac et~al.(2018)Fisac, Gates, Hamrick, Liu, Hadfield-Mennell,
  Palaniappan, Malik, Sastry, Griffiths, and Dragan]{fisac2018pragmatic}
Jaime Fisac, Monica~A Gates, Jessica~B Hamrick, Chang Liu, Dylan
  Hadfield-Mennell, Malayandi Palaniappan, Dhruv Malik, S~Shankar Sastry,
  Thomas~L Griffiths, and Anca~D Dragan.
\newblock Pragmatic-pedagogic value alignment.
\newblock In \emph{International Symposium on Robotics Research (ISRR)}, 2018.

\bibitem[Hadfield-Menell et~al.(2016)Hadfield-Menell, Russell, Abbeel, and
  Dragan]{hadfield2016cooperative}
Dylan Hadfield-Menell, Stuart~J Russell, Pieter Abbeel, and Anca Dragan.
\newblock Cooperative inverse reinforcement learning.
\newblock In \emph{Neural Information Processing Systems (NeurIPS)}, pp.\
  3909--3917, 2016.

\bibitem[Ho et~al.(2016)Ho, Littman, MacGlashan, Cushman, and
  Austerweil]{ho2016showing}
Mark~K Ho, Michael Littman, James MacGlashan, Fiery Cushman, and Joseph~L
  Austerweil.
\newblock Showing versus doing: Teaching by demonstration.
\newblock In \emph{Neural Information Processing Systems (NeurIPS)}, 2016.

\bibitem[Ho et~al.(2018)Ho, Littman, Cushman, and
  Austerweil]{ho2018effectively}
Mark~K Ho, Michael~L Littman, Fiery Cushman, and Joseph~L Austerweil.
\newblock Effectively learning from pedagogical demonstrations.
\newblock In \emph{Annual Conference of the Cognitive Science Society
  (CogSci)}, 2018.

\bibitem[Jain et~al.(2015)Jain, Sharma, Joachims, and Saxena]{jain2015learning}
Ashesh Jain, Shikhar Sharma, Thorsten Joachims, and Ashutosh Saxena.
\newblock Learning preferences for manipulation tasks from online coactive
  feedback.
\newblock \emph{The International Journal of Robotics Research (IJPR)}, 2015.

\bibitem[Krakovna(2018)]{krakovna_2018}
Victoria Krakovna.
\newblock Specification gaming examples in {AI}, Jun 2018.
\newblock URL
  \url{https://vkrakovna.wordpress.com/2018/04/02/specification-gaming-examples-in-ai/}.

\bibitem[Lehman et~al.(2018)Lehman, Clune, Misevic, Adami, Beaulieu, Bentley,
  Bernard, Belson, Bryson, Cheney, et~al.]{lehman2018surprising}
Joel Lehman, Jeff Clune, Dusan Misevic, Christoph Adami, Julie Beaulieu,
  Peter~J Bentley, Samuel Bernard, Guillaume Belson, David~M Bryson, Nick
  Cheney, et~al.
\newblock The surprising creativity of digital evolution: A collection of
  anecdotes from the evolutionary computation and artificial life research
  communities.
\newblock \emph{arXiv preprint arXiv:1803.03453}, 2018.

\bibitem[Levine \& Koltun(2012)Levine and Koltun]{levine2012continuous}
Sergey Levine and Vladlen Koltun.
\newblock Continuous inverse optimal control with locally optimal examples.
\newblock \emph{International Conference on Machine Learning (ICML)}, 2012.

\bibitem[Malik et~al.(2018)Malik, Palaniappan, Fisac, Hadfield-Menell, Russell,
  and Dragan]{malik18}
Dhruv Malik, Malayandi Palaniappan, Jaime Fisac, Dylan Hadfield-Menell, Stuart
  Russell, and Anca Dragan.
\newblock An efficient, generalized {B}ellman update for cooperative inverse
  reinforcement learning.
\newblock In \emph{International Conference on Machine Learning (ICML)}, 2018.

\bibitem[Ng \& Russell(2000)Ng and Russell]{ng2000algorithms}
Andrew~Y Ng and Stuart~J Russell.
\newblock Algorithms for inverse reinforcement learning.
\newblock In \emph{International Conference on Machine Learning (ICML)}, 2000.

\bibitem[Sadigh et~al.(2017)Sadigh, Dragan, Sastry, and
  Seshia]{dorsa2017active}
Dorsa Sadigh, Anca~D Dragan, Shankar Sastry, and Sanjit~A Seshia.
\newblock Active preference-based learning of reward functions.
\newblock In \emph{Robotics: Science and Systems (RSS)}, 2017.

\bibitem[Wang et~al.(2019)Wang, Paranamana, and Shafto]{wang2019generalizing}
Pei Wang, Pushpi Paranamana, and Patrick Shafto.
\newblock Generalizing the theory of cooperative inference.
\newblock \emph{AISTATS}, 2019.

\bibitem[Wirth et~al.(2017)Wirth, Akrour, Neumann, and
  F{\"u}rnkranz]{wirth2017survey}
Christian Wirth, Riad Akrour, Gerhard Neumann, and Johannes F{\"u}rnkranz.
\newblock A survey of preference-based reinforcement learning methods.
\newblock \emph{Journal of Machine Learning Research (JMLR)}, 2017.

\bibitem[Yang et~al.(2018)Yang, Yu, Givchi, Wang, Vong, and
  Shafto]{yang2018optimal}
Scott Cheng-Hsin Yang, Yue Yu, Arash Givchi, Pei Wang, Wai~Keen Vong, and
  Patrick Shafto.
\newblock Optimal cooperative inference.
\newblock \emph{AISTATS}, 2018.

\bibitem[Ziebart et~al.(2008)Ziebart, Maas, Bagnell, and
  Dey]{ziebart2008maximum}
Brian~D Ziebart, Andrew~L Maas, J~Andrew Bagnell, and Anind~K Dey.
\newblock Maximum entropy inverse reinforcement learning.
\newblock In \emph{AAAI}, 2008.

\end{thebibliography}
\bibliographystyle{iclr2019_conference}

\end{document}